\documentclass{article}
\usepackage{graphicx} % Required for inserting images
\usepackage{amsmath}
\usepackage{amsthm}
\usepackage{amssymb}
\usepackage{amsfonts}
\usepackage{authblk}
\usepackage{hyperref}
\usepackage{lettrine}
\usepackage{tikz,tikz-cd,tikz-qtree}
\usepackage{quiver}
\usepackage{turnstile}
\usepackage{pifont}
\usepackage[backend=biber, sortcites, uniquename=init, giveninits]{biblatex}
\newcommand{\cmark}{\ding{51}}%
\newcommand{\xmark}{\ding{55}}%

\theoremstyle{definition}
\newtheorem{defn}{Definition}[section]

\theoremstyle{plain}
\newtheorem{prop}{Proposition}[section]

\makeatletter
\newcommand*{\da@rightarrow}{\mathchar"0\hexnumber@\symAMSa 4B }
\newcommand*{\da@leftarrow}{\mathchar"0\hexnumber@\symAMSa 4C }
\newcommand*{\xdashrightarrow}[2][]{%
  \mathrel{%
    \mathpalette{\da@xarrow{#1}{#2}{}\da@rightarrow{\,}{}}{}%
  }%
}
\newcommand{\xdashleftarrow}[2][]{%
  \mathrel{%
    \mathpalette{\da@xarrow{#1}{#2}\da@leftarrow{}{}{\,}}{}%
  }%
}
\newcommand*{\da@xarrow}[7]{%
  \sbox0{$\ifx#7\scriptstyle\scriptscriptstyle\else\scriptstyle\fi#5#1#6\m@th$}%
  \sbox2{$\ifx#7\scriptstyle\scriptscriptstyle\else\scriptstyle\fi#5#2#6\m@th$}%
  \sbox4{$#7\dabar@\m@th$}%
  \dimen@=\wd0 %
  \ifdim\wd2 >\dimen@
    \dimen@=\wd2 %   
  \fi
  \count@=2 %
  \def\da@bars{\dabar@\dabar@}%
  \@whiledim\count@\wd4<\dimen@\do{%
    \advance\count@\@ne
    \expandafter\def\expandafter\da@bars\expandafter{%
      \da@bars
      \dabar@ 
    }%
  }%  
  \mathrel{#3}%
  \mathrel{%   
  \mathop{\da@bars}\limits
  \ifx\\#1\\%
  \else
  _{\copy0}%
  \fi
  \ifx\\#2\\%
  \else
  ^{\copy2}%
  \fi
  }%   
  \mathrel{#4}%
  }
  \makeatother

\title{da Costa and Tarski meet Goguen and Carnap:\\
\large a novel approach for ontological heterogeneity based on consequence systems}
\author{
Gabriel Rocha\\
\small Centre for Logic, Epistemology and the History of Science\\
\small Universidade Estadual de Campinas (Unicamp) \\
\small \texttt{gabrielrocha.comp [at] gmail.com}}
\date{}

\begin{document}

\maketitle

\begin{abstract}
    This paper presents a novel approach for ontological heterogeneity that draws heavily from Carnapian-Goguenism, as presented by \textcite{Kutz_Mossakowski_Lücke_2010}. The approach is provisionally designated da Costian-Tarskianism, named after da Costa's Principle of Tolerance in Mathematics and after Alfred Tarski's work on the concept of a consequence operator. The approach is based on the machinery of consequence systems, as developed by \textcite{carnielli2008analysis} and \textcite{citkin2022consequence}, and it introduces the idea of an extended consequence system, which is a consequence system extended with ontological axioms. The paper also defines the concept of an extended development graph, which is a graph structure that allows ontologies to be related via morphisms of extended consequence systems, and additionally via other operations such as fibring and splitting. Finally, we  discuss the implications of this approach for the field of applied ontology and suggest directions for future research.
\end{abstract}

\section{Introduction}

It is not that there exists high diversity in all facets of what constitutes an \textbf{o}ntology in the context of applied ontology\footnote{In this paper, the term 'ontology' is written in two different ways to minimize confusion: ontology to represent the abstract concept of an ontology, and \textbf{o}ntology to represent a concrete instance of an ontology.}. Futhermore, to reason over more than a single \textbf{o}ntology is often desirable or necessary, leading to the concepts of \textit{ontological heterogeneity} and \textit{interoperability} \cite{visser1997analysis}. Ontological heterogeneity may be defined as a scenario in which two \textbf{o}ntologies make different ontological assumptions about the same domain, potentially using different representation languages. Furthermore, ontological interoperability refers to the ability to reason over \textbf{o}ntologies in a heterogeneous scenario. Heterogeneity and interoperability are not limited to computational systems, such as Ontology-Based Data Access (OBDA) systems, however the concepts are frequently seen as causes for complexity in this context.

Introducing heterogeneity may difficult interoperability, thereby leading the need of develop methods and tools to manage it. The approach presented in this paper is a method to manage ontological heterogeneity, and it is based on the machinery of consequence systems, as developed by \textcite{carnielli2008analysis} and \textcite{citkin2022consequence}, and on the meta-logical principles of what is tentativeley called da Costa-Tarskianism, as presented by \textcite{marcos2004possible}. Our approach is inspired by the Carnapian-Goguenist enterprise, as presented by \textcite{Kutz_Mossakowski_Lücke_2010}, and ours can be understood as a dual to it. The key underlying difference between the two approaches is that da Costian-Tarskianism utilizes consequence systems to represent classes of logics, whereas Carnapian-Goguenism utilizes institutions. This difference has implications for the way \textbf{o}ntologies are defined and related, as will be shown in the following sections. For further references on the topic of ontological diversity, and how it relates to heterogeneity and interoperability, see the surveys by \textcite{10.2478/cait-2025-0035} and \textcite{schneider-simkus-2020}.

\section{What is a logic?}

Unlike Carnapian-Goguenism, the da Costian-Tarskianist approach does not utilize institutions to define classes of logics via their \textit{semantics}. Instead, consequence systems are chosen to represent classes of logics via their \textit{syntax}. As \textcite{carnielli2008analysis} have shown, various kinds of calculi, such as Hilbert calculi, sequent calculi and tableau calculi, induce consequence systems. Consequently, consequence systems are sufficiently abstract to permit the construction of heterogeneous operations.

\textcite{tarski1930fundamentale} initially presented the consequence operator as way to characterize what a logical system should preserve in order to be called a proper logic. The operator draws from the closure operator originally presented by Kazimierz Kuratowski, and it attempts to capture, as suggested by its name, the notion of what formulas should follow as syntactical consequences of a given theory in a rather general and abstract way. The consequence operator is defined as follows, as per \textcite[p. 17]{marcos2004possible}:

\begin{defn}[Consequence operator or relation]
    For a given set $For$ of formulas, $\dststile{}{} \subseteq \mathcal{P}(For) \times For$ is a consequence operator (or relation) if the following three conditions hold:
    \begin{enumerate}
        \item If $A \in \Gamma$ then $\Gamma \dststile{}{} A$ \hfill reflexivity
        \item If $\Delta \dststile{}{} A$ and $\Delta \subseteq \Gamma$ then $\Gamma \dststile{}{} A$ \hfill monotonicity
        \item If $\Delta \dststile{}{} A$ and $\Gamma, A \dststile{}{} B$ then $\Delta, \Gamma \dststile{}{} B$ \hfill transitivity
    \end{enumerate}
\end{defn}

\textcite{marcos2004possible} renamed da Costa's Principle of Tolerance in Mathematics to Principle of Non-Triviality, and they formalized it using the consequence operator $\dststile{\mathbf{L}}{}$ over a given logic $\mathbf{L}$. In addition to the Principle of Non-Triviality, they formalized two other logical principles:

\textsc{Principle of Non-Contradiction} (PNC)\\
$\mathbf{L}$ must be non-contradictory: $\exists \Gamma \forall A~ (\Gamma \not \dststile{\mathbf{L}}{} A \text{ or } \Gamma \not \dststile{\mathbf{L}}{} \lnot A)$

\textsc{Principle of Non-Triviality} (PNT)\\
$\mathbf{L}$ must be non-trivial: $\exists \Gamma \exists B~ (\Gamma \not \dststile{\mathbf{L}}{} B)$

\textsc{Principle of Explosion} (PoE) or \textsc{Principle of Pseudo-scotus} (PPS)\\
$\mathbf{L}$ must be explosive: $\forall \Gamma, A, B~ (\Gamma, A, \lnot A \dststile{\mathbf{L}}{} B)$

The objective of defining such principles is to capture meta-logical properties of a given logic. In this regard, da Costa's principle is in essence stating that no admissible logic should violate PNT. \textcite{marcos2004possible} defines, for instance, a paraconsistent logic as a logic such that PNC and PNT hold but not PPS. This meta-logical framework consists of the foundations of da Costa-Tarskianism.

Prior to defining consequence systems, it is necessary to present some preliminaries. Many of the definitions hereafter are due to \textcite{carnielli2008analysis}, the first of which is that of a signature.

\begin{defn}[Signature]
    A signature $C$ is a countable family of sets $C_k$ where $k \in \mathbb{N}$. The index $k$ states the arity of the connectives, meaning $C_k$ is the set of $k$-ary connectives. Usually $C$ is a finite set.
\end{defn}

The set $C_0$ of zero-ary connectives over a signature $C$ is usually called the set of \textit{constants} of $C$. Given two signatures $C'$ and $C''$, it is customary to write $C' \leq C''$ if $C''_k \subseteq C'_k$ for every $k \in \mathbb{N}$. It may be noted that a signature $C$, in the current context, \textit{does} contain the logical symbols.

A signature induces a language over it. Intuitively, the language of the signature is the set of all formulas that can be inductively constructed using its connectives and variables.

\begin{defn}[Language of a signature]
    Let $C$ be a signature and $\Xi = \{ \xi_n : n \in \mathbb{N}^+ \}$ be an enumerable set called the set of \textit{schema variables}. The language over $C$ is the set $L(C)$ inductively defined as follows:
    \begin{itemize}
        \item $\xi \in L(C)$ for every $\xi \in \Xi$
        \item $c \in L(C)$ for every $c \in C_0$
        \item $(c(\phi_1, \ldots, \phi_k)) \in L(C)$ whenever $c \in C_k$, $k \geq 1$, and $\phi_1, \ldots, \phi_k \in L(C)$.
    \end{itemize}
\end{defn}

Signatures can be related in a categorial setting via signature morphisms. Signatures and signature morphisms constitute the category $\mathbf{Sig}$. Observe that in this context, $\mathbf{Sig}$ denotes a \textit{specific} and highly abstract category of signatures tailored for the approach, whereas the category $\mathbf{Sign}$ in the definition of an institution is specific to a particular institution.

\begin{defn}[Signature morphism]
    A \textit{signature morphism} $h : C \rightarrow C'$ is a family of maps $h_k : C_k \rightarrow C_k'$, where $k \in \mathbb{N}$.
\end{defn}

\begin{defn}[Category of signatures]
        The category of signatures $\mathbf{Sig}$ is defined as follows:
    \begin{itemize}
        \item The objects are signatures $C$
        \item The morphisms are signature morphisms as described above
    \end{itemize}
    Composition of morphisms $h$ and $g$ is defined as the composition of each indexed map, i.e. $h \circ g$ is the family of maps $h_k \circ g_k$ and the identity morphism is the family map of identity morphisms, i.e. $id_k$, where $k \in \mathbb{N}$.
\end{defn}

One may generate a new signature from two inputs by means of fibring. The fibring of two signatures is defined below.

\begin{defn}
    The \textit{fibring of signatures} $C'$ and $C''$ is the signature
    $$ C' \cup C''$$
    such that $(C' \cup C'')_k = C'_k \cup C''_k$ for every $k \in \mathbb{N}$.
\end{defn}

Signatures and their associated languages are fundamental to define consequence systems. As \textcite{citkin2022consequence} note, \citeauthor{tarski1930fundamentale} not only presented consequence operators in his 1930 paper but also consequence systems indirectly. Intuitively, consequence systems talk about the closure of consequence operators --- that is, the set of all formulas which follow from a given theory. Consequence systems and consequence operators induce one another, as pointed out by \textcite{citkin2022consequence}.

\begin{defn}[Consequence System]
    \label{def-cons-system}
    A \textit{consequence system} is a pair $\langle C, \mathsf{C} \rangle$ where $C$ is a signature and $\mathsf{C} : \mathcal{P}L(C) \rightarrow \mathcal{P}L(C)$ is a map with the following properties:
    \begin{enumerate}
        \item $\Gamma \subseteq \mathsf{C}(\Gamma)$ \hfill extensivity
        \item $\Gamma_1 \subseteq \Gamma_2$ then $\mathsf{C}(\Gamma_1) \subseteq \mathsf{C}(\Gamma_2)$ \hfill monotonicity
        \item $\mathsf{C}(\mathsf{C}(\Gamma)) \subseteq \mathsf{C}(\Gamma)$ \hfill idempotence
    \end{enumerate}
\end{defn}

Before proceeding, it is necessary examine in greater detail the reasons for choosing consequence systems over more complex machinery from proof theory, such as abstract proof systems as presented by \textcite{carnielli2008analysis} themselves or even a categorial approach as described by \textcite{HYLAND200243}. As previously noted, consequence systems can be employed to represent the syntactical aspect of a wide array of logics. Indeed, consequence systems are sufficiently capable of expressing, intuitively, what set of sentences may be deduced from another set of sentences, but not precisely how. That is, the consequence operator does not encode a proof: the entire sequence of steps necessary to deduce a set of consequents from a set of antecedents. While this may be a shortcoming for certain applications where explainability is a major concern, it is not the case from the ontological point of view. The primary objective of a heterogeneous applied ontological framework is to provide the ability to describe and reason about concepts. However, the specific mechanisms underlying reasoning are relegated to the domains of metaphysics and grounding. The primary contention of this paper is that, in the field of applied ontology, the \textbf{o}ntologies do not require self-awareness, that is, they do not need to encode information about their own inner workings.

Consequence systems represent a \textit{class} of logics and are devoid of a semantic construct, thus exhibiting a sort of duality to institutions. A consequence system may also be attached to a semantic construct, such as algebraic semantics as done via \citeauthor{carnielli2008analysis}'s interpretation systems \cite[p. 92]{carnielli2008analysis}, resulting in a signature-\textbf{bound} logic (also called a logic system). Because consequence systems are built on top of a \textit{particular} signature, this is a key distinction between them and institutions. This fact is also the reason the duality is not mirror-like. As noted by \textcite{Coniglio2005-CONTAS}, a better candidate for proper mirror-like duality, when compared to institutions, are \citeauthor{meseguer1989general}'s entailment systems.

% \marginpar{To write examples.}

\textcite{carnielli2008analysis} also define some particular types of consequence systems. In what follows, $\mathcal{P}_\mathsf{fin} S$ denotes the set of all finite subsets of $S$.

\begin{defn}[Compact Consequence System]
    A consequence system $\langle C, \mathsf{C} \rangle$ is \textit{compact} or \textit{finitary} if
    $$ \mathsf{C}(\Gamma) = \bigcup_{\Phi \in \mathcal{P}_{\mathsf{fin}}\Gamma} \mathsf{C}(\Phi) $$
    For each $\Gamma \subseteq L(C)$. Compact consequence systems are also called \textit{abstract systems} in \cite{davey_priestley_2002}.
\end{defn}

\begin{defn}[Quasi-consequence System]
    A \textit{quasi-consequence system} is a consequence system such that idempotence, as defined in \ref{def-cons-system} does not necessarily hold.
\end{defn}

\begin{defn}
    A consequence system $\langle C, \mathsf{C} \rangle$ is \textit{closed for renaming substitutions} if
    $$ \sigma(\mathsf{C}(\Gamma)) \subseteq \mathsf{C}(\sigma(\Gamma))$$
    for every $\Gamma \in L(C)$ and every renaming substitution $\sigma$, i.e. $\sigma(\{\zeta\}) = \{\zeta'\}$ for each $\zeta \in \Xi$, where $\zeta' \in \Xi$. If the inclusion holds for every substitution, the consequence system is called \textit{structural}.
\end{defn}

Consequence systems may be related in an order theory sense, by introducing a weakness relation, and also in a categorial setting, by defining morphisms between consequence systems. Consequence systems and their morphisms constitute a category, $\mathbf{Csy}$.

\begin{defn}[Weakness relation]
    \label{cons-weakness-relation}
    The consequence system $\langle C, \mathsf{C} \rangle$ is \textit{weaker} than consequence system $\langle C', \mathsf{C}' \rangle$, written
    $$ \langle C, \mathsf{C} \rangle \leq \langle C', \mathsf{C}' \rangle $$
    if $L(C) \subseteq L(C')$ and $\mathsf{C}(\Gamma) \subseteq \mathsf{C}'(\Gamma)$ for every subset $\Gamma$ of $L(C)$.
    Additionally, $\langle C, \mathsf{C} \rangle$ is \textit{partially weaker} than consequence system $\langle C', \mathsf{C}' \rangle$, written
    $$ \langle C, \mathsf{C} \rangle \leq_p \langle C', \mathsf{C}' \rangle $$
    if $L(C) \subseteq L(C')$ and $\mathsf{C}(\emptyset) \subseteq \mathsf{C}'(\emptyset)$.
\end{defn}

\begin{defn}[Consequence system morphisms]
    A \textit{consequence system morphism} $h : \langle C, \mathsf{C} \rangle \rightarrow \langle C', \mathsf{C}' \rangle$ is a map $h : L(C) \rightarrow L(C')$ such that
    $$ h(\mathsf{C}(\Gamma)) \subseteq \mathsf{C}'(h(\Gamma))$$
    for every $\Gamma \subseteq L(C)$.
\end{defn}

One can think of consequence system morphisms as transformations that weakly translate consequence. If $\{\phi\}$ is a consequent of $\Gamma$ via $\mathsf{C}$ in the source consequence system, then $h(\{\phi\})$ is a consequent of $h(\Gamma)$ in the target consequence system. 

\begin{defn}[Category of consequence systems]
    The category of consequence systems $\mathbf{Csy}$ is defined as follows:
    \begin{itemize}
        \item The objects are consequence systems $\langle C, \mathsf{C} \rangle$
        \item The morphisms are consequence system morphisms as described above
    \end{itemize}
    Composition of morphisms $h$ and $g$ is defined as the usual composition of maps $g \circ h$. The identity morphism is the identity map, i.e. $h(X) = X$ for every $X$.
\end{defn}

The category \textbf{Csy} may be thought of as the web which connects classes of logics (i.e. consequence systems) whenever it is possible to establish a weak translation between their deductive aspect. That is, if a consequence system's consequence map $\mathsf{C}$ is a subset of another consequence system, under some translation, then there is a morphism between them. 

\section{What is an ontology?}

Similarly to institutions, consequence systems, in and of themselves, are insufficient to describe \textbf{o}ntologies, as they lack the mechanisms to encode ontological information\footnote{This claim relies on the philosophical distinction between Ontology and Logic.}. For this reason, it is necessary to employ an approach analogous to the Carnapian-Goguenist way, whereby the logical machinery is extended with ontological axioms.

The intuitive understanding behind this approach is that an \textbf{o}ntology is merely a consequence system plus a theory (i.e. the set of ontological axioms). This method of definining \textbf{o}ntologies is more straight-forward than the Carnapian-Goguenist approach, although it initially places \textbf{o}ntologies in an \textbf{o}ntological vacuum as, by design, \textbf{o}ntologies do not encode information about their own inner workings or about other \textbf{o}ntologies. Extended development graphs represent a potential solution to this apparent limitation, as they permit \textbf{o}ntologies to be related.

It is necessary to note that one could ``append'' axioms, representing an ontological theory, to an existing consequence system by changing its consequence operator. By doing so, one obtains a new class of logics including the ontological theory as axioms. Thus, a consequence system could be defined as an \textbf{o}ntology itself. However, the reasons for not doing so are of philosophical and computational nature. On the first point, by making the ontological theory explicit, one draws the boundary between what is inherent to an intuitive ``mode of reasoning'' (i.e. a class of logics, the consequence system itself) and what is particular to a description of the world (i.e. the ontological axioms themselves). This position follows \citeauthor{daCosta2002-DACLAO}'s view on the distinction between Logic and Ontology, both as philosophical areas and as objects, as exposed in \cite{daCosta2002-DACLAO}. Furthermore, such position is aligned with \citeauthor{10.1006/ijhc.1996.0091}'s thesis that domain and reasoning knowledge should be independent \textcite{10.1006/ijhc.1996.0091}. With regard to the second point, clearly specifying the ontological axioms, and presenting an algorithmic way to manage such axioms, increases the inherent modularity of the approach. By not equating a consequence system to an \textbf{o}ntology, it is possible to utilize a single consequence systems to generate several \textbf{o}ntologies based on it.

As a preliminary step, below is the definition of an extended consequence system.

\begin{defn}[extended consequence system]
    An \textit{extended consequence system} is a quadruple $\langle C, \mathsf{C}, C_o, \Gamma_o \rangle$, where $C, C_o$ are signatures, $\mathsf{C}$ is a consequence map and $\Gamma_o \in L(C)$ is a set, such that:
    \begin{enumerate}
        \item $\langle C, \mathsf{C} \rangle$ is a consequence system
        \item For every $C_k \in C$ and $C_k' \in C_o$, $C_k' \subset C_k$
        \item For every $\phi \in \Gamma_o$, $\phi \in \mathsf{C}(\emptyset)$ --- i.e. $\Gamma_o$ is an axiomatic theory
    \end{enumerate}
\end{defn}

An \textbf{o}ntology is then defined as a \textit{particular kind} of extended consequence system. 

\begin{defn}[\textbf{O}ntology]
    An \textit{\textbf{o}ntology} is defined as a particular extended consequence system $\langle C, \mathsf{C}, C_o, \Gamma_o \rangle$. When the underlying $\langle C, \mathsf{C} \rangle$ consequence system is implicitly understood, we may drop it and refer to an \textbf{o}ntology by its components $C_o$ and $\Gamma_o$ (also called ontological aspect components).
\end{defn}

The rationale behind extending a consequence system is now evident. It is possible to distinguish between the ontological content and the purely logical machinery, while allowing the machinery to handle ontological knowledge. From this point onward, the terms ``extended consequence system'' and \textbf{o}ntology are used interchangeably in the context of the da Costa-Tarski approach. The point is that \textbf{o}ntologies are a specific kind of extended consequence systems whose theory $\Gamma$ is defined to match the definition of \textbf{o}ntology discussed and proposed by \textcite{guarino1998formal}.

Extended consequence systems and their morphisms constitute the category $\mathbf{ECsy}$:

\begin{defn}[Category of extended consequence systems (or \textbf{o}ntologies)]
    The category $\mathbf{ECsy}$ of extended consequence systems is defined as follows:
    \begin{itemize}
        \item Its objects are extended consequence systems $\langle C, \mathsf{C}, C_o, \Gamma_o \rangle$
        \item A morphism between extended consequence systems $\langle C^A, \mathsf{C}^A, C^A_o, \Gamma^A \rangle$ and $\langle C^B, \mathsf{C}^B, C^B_o, \Gamma^B \rangle$ is a consequence system morphism $h : \langle C^A, \mathsf{C}^A \rangle \rightarrow \langle C^B, \mathsf{C}^B \rangle$ such that $h(\Gamma^A) = \Gamma^B$.
    \end{itemize}
    Composition between extended consequence system morphisms works as expected, composing consequence system morphisms and composing the $\Gamma$ mappings. The identity morphism is the consequence system identity morphism, since $id(\Gamma) = \Gamma$.
\end{defn}

It should be noted that \textbf{o}ntologies are not defined with respect to development graphs and do not require a Grothendieck construction in order to relate two \textbf{o}ntologies based on different underlying logics. Nevertheless, it is possible to construct a graph structure that allows \textbf{o}ntologies to be related via morphisms of $\mathbf{ECsy}$, and additionally via other operations such as fibring and splitting. Consequently, \textbf{o}ntologies in the da Costa-Tarski sense are sufficiently flexible to be independent of external structures such as graphs. However, they permit the existence of such external structures in a manner that is both useful and coherent.

\section{How to relate ontologies?}

This section details the structures and mechanisms required to relate \textbf{o}ntologies. The following is based on the machinery developed by \textcite{carnielli2008analysis} to reinterpret morphisms and other operations between consequence systems as means to refine, integrate and connect \textbf{o}ntologies.

Once again inspired by the approach from \textcite{Kutz_Mossakowski_Lücke_2010}, we define the concept of an extended development graph. This will form the basis of our framework and will slowly become mutated with add-ons to increase complexity and expressivity of operations.

\begin{defn}
    An \textit{extended development graph} is a vertex and edge-labeled directed, acyclic graph (DAG) $\mathcal{AG} = \langle \mathcal{N}, \mathcal{L} \rangle$, where:
    \begin{itemize}
        \item $\mathcal{N}$ is a set of nodes, where each node $N \in \mathcal{N}$ is labeled with an extended consequence system $\langle C^N, \mathsf{C}^N, C^N_o, \Gamma^N_o \rangle$
        \item $\mathcal{L}$ is a sorted set of directed links:
        \begin{itemize}
            \item \textit{definition links} $K \xrightarrow{h} N$ where $h$ is an extended consequence system morphism
            \item \textit{theorem links} $K \xdashrightarrow{t} N$ for each $K, N \in \mathcal{N}$ if the consequence system labeling $K$ is weaker than the consequence system labeling $N$, c.f. definition \ref{cons-weakness-relation}.
        \end{itemize}
    \end{itemize}
\end{defn}

An extended development graph captures a web of \textbf{o}ntologies related by definition links, corresponding to morphisms, and induced links, corresponding to existing weakness relations. If there is a theorem link between two \textbf{o}ntologies, it is possible to informally assess that one is a non-conservative sub-\textbf{o}ntology of the other. Same as in the Carnapian-Goguenist approach, the fact extended development graphs are directed and acyclic ensures that there are no circular definitions.

\subsection{Refinement}

\textbf{O}ntologies defined via extended consequence systems can also be refined in a manner that is highly analogous to the process of refining their institution-based counterpart. There are two kinds of refinements, homogeneous and heterogeneous.

% \marginpar{When do refinements compose?}

\begin{defn}[Homogeneous refinements]
    Given two \textbf{o}ntologies $O_1$ and $O_2$ in an extended development graph, $O_2$ is called a \textit{homogeneous refinement} of $O_1$ if there is a theorem link $O_1 \xdashrightarrow{t} O_2$.
\end{defn}

\begin{defn}[Heterogeneous refinements]
    Given two \textbf{o}ntologies $O_1$ and $O_2$, $O_2$ is a \textit{heterogeneous refinement} of $O_1$ in the underlying extended development graph if:
    \begin{itemize}
        \item there is a third \textbf{o}ntology $O_2'$ such that a definition link $O_1 \xrightarrow{h} O_2$ exists, where $h$ is monomorphic
        \item there is a theorem link $O_1 \xdashrightarrow{t} O_2$
    \end{itemize}
\end{defn}

Refinements in the context of extended consequence systems are sufficiently similar to those of institutions that they may be represented diagrammatically in the same manner, as depicted in figure \ref{fig:het-refinement-aug}. However, the meaning of each link is quite different in each approach. In the Carnapian-Goguenist approach, for one \textbf{o}ntology to be heterogeneously refined into another, the latter must conserve models of the former. On the other hand, in the da Costian-Tarskian approach, refinement conserves theoremhood. 

    \begin{figure}[ht!]
        \centering
\[\begin{tikzcd}
	& {O_2'} \\
	{O_1} && {O_2}
	\arrow["t", dashed, from=2-1, to=1-2]
	\arrow["mono", "h"', from=2-3, to=1-2]
\end{tikzcd}\]
        \caption{Heterogeneous refinement from $O_1$ to $O_2$.}
        \label{fig:het-refinement-aug}
    \end{figure}
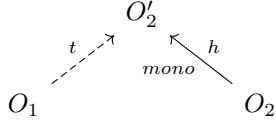

One can also define mirrored definitions of equivalence relations, as presented in definitions by \textcite{Kutz_Mossakowski_Lücke_2010} in section 3.1.2. Furthermore, two \textbf{o}ntologies are weakly equivalent if they can be heterogeneously and conservatively refined into each other (i.e. they preserve each other's theorems), and they are pre-equivalent if they share a common monomorphic extension.

\subsection{Integration}

As defined by \textcite{Kutz_Mossakowski_Lücke_2010}, integration between \textbf{o}ntologies is based on refinements. This approach will be followed here, as it is a parallel to the existing framework.
% \marginpar{Other kinds of integration?}

\begin{defn}[Heterogeneous integration]
    Let $O_1$ and $O_2$ be \textbf{o}ntologies and $O$ a so-called reference \textbf{o}ntology, in a given extended development graph. We say $O$ \textit{heterogenously (conservatively) integrates} $O_1$ and $O_2$ if there are (conservative) refinements from both $O_1$ and $O_2$.
\[\begin{tikzcd}
	& {O_1'} && {O_2'} \\
	{O_1} && O && {O_2}
	\arrow["{\sigma_1}", dashed, from=2-1, to=1-2]
	\arrow["{\theta_1}"', "mono", from=2-3, to=1-2]
	\arrow["{\theta_2}", "mono"', from=2-3, to=1-4]
	\arrow["{\sigma_2}"', dashed, from=2-5, to=1-4]
\end{tikzcd}\]
    In this diagram, $O$ is the result of a heterogeneous integration between $O_1$ and $O_2$. If $\theta_1$ and $\theta_2$ are monomorphic, $O$ is the result of a conservative heterogeneous integration.
\end{defn}

\subsection{Connection}

Thus far, it appears that extended consequence systems and institutions are operationally analogous, as evidenced by the superficial similarity of the machinery for connecting and integrating. From this point onward, the operational parallels will be broken down by the introduction of a new tooling system. As a brief reminder, the objective of the connection operation between two \textbf{o}ntologies is to generate a third, novel \textbf{o}ntology that retains characteristics of the input \textbf{o}ntologies. In the da Costa-Tarskian approach, \textbf{o}ntologies may be connected by means of \textit{fibring}.

Fibring was originally presented by \citeauthor{Gabbay1996} as a means to combine two normal modal logics into a third normal bimodal logic \cite{Gabbay1996, Gabbay1996-GABFSA}. \textcite{10.1093/logcom/9.2.149} generalized fibring to deal with propositional logics in general --- given two propositional logics $L_1$ and $L_2$, the fibring of them is a new logic $L_1 \circ L_2$ which is a minimal conservative extension of both. Although fibring has been further generalized in various accounts (see, for instance, \cite{Sernadas2009-SERGFO, 7bb21a0a-8e04-322f-a004-4914849e5adf, caleiro-ramos-2007}), we will present a slightly modified version of algebraic fibring as presented in \cite[p. 150-160]{carnielli2008analysis} for extended consequence systems.

The intuitive idea behind fibring is it joins two logics by translating them into a language guaranteed to be conflict-free. As a matter of fact, before formally defining the fibring operation it is necessary to define translations between languages of consequence systems.

\begin{defn}[Translations and substitutions]
    Let $C$ and $C'$ be two signatures such that $C \leq C'$ and $g : L(C') \rightarrow \mathbb{N}$ a bijection. The \textit{translation}
    $$ \tau_g : L(C') \rightarrow L(C) $$
    is the map defined inductively as follows:
    \begin{itemize}
        \item $\tau_g(\xi_i) = \xi_{2i+1}$ for $\xi_i \in \Xi$
        \item $\tau_g(c) = c$ for $c \in C_0$
        \item $\tau_g(c') = \xi_{2g(c')}$ for $c' \in C'_0 \setminus C_0$
        \item $\tau_g(c(\gamma'_1,\ldots,\gamma'_k)) = (c(\tau_g(\gamma'_1),\ldots,\tau_g(\gamma'_k)))$ for $c \in C_k$, $k > 0$ and $\gamma'_1\,\ldots,\gamma'_k \in L(C')$
        \item $\tau_g(c'(\gamma'_1, \ldots, \gamma'_k)) = \xi_{2g(c'(\gamma'_1,\ldots,\gamma'_k))}$ for $c' \in C'_k \setminus C_k$, $k > 0$ and formulas $\gamma'_1,\ldots,\gamma'_k \in L(C')$
    \end{itemize}
    The preliminary \textit{substitution}
    $$ \tau_g^{-1} : \Xi \rightarrow L(C') $$
    is defined as
    \begin{itemize}
        \item $\tau_g^{-1}(\xi_{2i+1}) = \xi_i$ for $\xi_i \in \Xi$
        \item $\tau_g^{-1}(\xi_{2i}) = g^{-1}(i)$
    \end{itemize}
    If a preliminary substitution $\tau_g^{-1}$ is extended to $L(C)$, corresponding to the proper inverse of $\tau_g$, it is called a proper substitution or simply a substitution\footnote{In order to extend a preliminary substitution to a proper substitution, it is necessary to present the inductive rules for the values of $\tau_g^{-1}$ and prove that $\tau_g \circ \tau_g^{-1} = \tau_g \circ \tau_g^{-1} = id$. The details of this construction are present in \cite[p. 150-151]{carnielli2008analysis}.}.
\end{defn}

Given a variable in $\tau_g(L(C'))$, one can determine if it comes from a variable or a formula starting with a connective in $C'\setminus C$ if its index is even or odd. Additionally, it may be noted that the translation $\tau_g$ maps symbols of $C'$ that are not in $C$ to propositional variables. This is by design, to ensure that no symbols of $C'$ get mapped to existing symbols of $C$.

In order to define fibring of consequence systems $\langle C', \mathsf{C}' \rangle$ and $\langle C', \mathsf{C}'' \rangle$, it is necessary to define translations between $L(C' \cup C'')$ and the input languages, $L(C')$ and $L(C'')$. Given a bijection $g : L(C \cup C') \rightarrow \mathbb{N}$, such translations are defined as follows:

$$ \tau'_g : L(C' \cup C'') \rightarrow L(C') \text{ and } \tau''_g : L(C' \cup C'') \rightarrow L(C'') $$

The corresponding substitutions are
$$ \tau'^{-1}_g : L(C') \rightarrow L(C' \cup C'') \text{ and } \tau''^{-1}_g : L(C'') \rightarrow L(C' \cup C'') $$

In what follows, for the sake of readability, a few provisional measures will be taken. The suffix $g$ will be removed for translations and substitutions by assuming that a fixed bijection $L(C' \cup C'') \rightarrow \mathbb{N}$ exists. Given a set $\Gamma \in L(C)$, $\tau(\Gamma)$ denotes the following set:
$$\tau(\Gamma) = \{ \tau(\phi) : \phi \in \Gamma \}$$

Additionally, in order to minimize confusion between the signature of a consequence system and the consequence system itself, consequence maps will be denoted by $\vdash$ at times. In this case, given a consequence system $\langle C, \vdash \rangle$ and set $\Gamma \in L(C)$, $\Gamma^\vdash$ denotes $\vdash(\Gamma)$ and is called the \textit{deductive closure} of $\Gamma$.

Given two consequence systems $\mathfrak{C}'$ and $\mathfrak{C}''$, translations and substitutions define the closure of each $L \subseteq L(C' \cup C'')$ with respect to the consequence maps of each consequence system.

\begin{defn}
    Let $\mathfrak{C}' = \langle C', \vdash' \rangle$ and $\mathfrak{C}'' = \langle C'', \vdash'' \rangle$ be two consequence systems and let $\Gamma \subseteq L(C' \cup C'')$. Assume that $\tau' : L(C) \rightarrow L(C')$ and $\tau'' : L(C') \rightarrow L(C)$ denote the translations between the two consequence systems. Similarly, $\tau^{'-1}$ and $\tau^{''-1}$ denote the respective substitutions between the consequence systems. The $\vdash'$-\emph{closure} of $\Gamma$ is the set:

    $$ \Gamma^{\vdash'} = \tau^{'-1}((\tau'(\Gamma))^{\vdash'}) $$

    Similarly, the $\vdash''$-\emph{closure} of $\Gamma$ is the set $ \Gamma^{\vdash''} = \tau^{''-1}((\tau^{''}(\Gamma))^{\vdash''}) $
\end{defn}

Intuitively, the $\vdash'$-closure of a set $\Gamma \in L(C' \cup C'')$ denotes the set obtained by first translating $\Gamma$ to the language of consequence system $\mathfrak{C}'$, computing its deductive closure, and mapping it back to the language $L(C' \cup C'')$.

Finally, the fibring of consequence systems is defined as follows.

\begin{defn}
    Let $\mathfrak{C}' = \langle C', \vdash' \rangle$ and $\mathfrak{C}'' = \langle C'', \vdash'' \rangle$ be two consequence systems. The \emph{fibring of two consequence systems} $\mathcal{C}'$ and $\mathcal{C}''$ is a pair
    $$ \mathcal{C}'\cup\mathcal{C}'' = \langle C, \vdash \rangle $$
    where
    \begin{itemize}
        \item $C$ is a signature such that $C_k = C_k' \cup C_k''$ for every $k \in \mathbb{N}$
        \item $\vdash : \mathcal{P}(L(C)) \rightarrow \mathcal{P}(L(C))$ where, for each $\Gamma \subseteq L(C)$, $\Gamma^\vdash$ is inductively defined as follows:
        \begin{enumerate}
            \item $\Gamma \subseteq \Gamma^{\vdash}$
            \item If $\Delta \subseteq \Gamma^{\vdash}$, then $\Delta^{\vdash'} \cup \Delta^{\vdash''} \subseteq \Gamma^{\vdash}$
        \end{enumerate}
    \end{itemize}
\end{defn}

The object generated by fibring two consequence systems is also a consequence system, as proven by By \textcite[prop. 4.1.24]{carnielli2008analysis}. One may conceptualize the resulting consequence system as being a disjoint union of the two input consequence systems, where their respective consequence maps are preserved, albeit under translations and substitutions, via the deductive closure.

It is important to note that consequence systems are not \textbf{o}ntologies in the current context. Consequently, it is necessary to extend the definition of fibring to extended consequence systems. This will provide a means to in fact to connect \textbf{o}ntologies.

\begin{defn}[Heterogeneous connection]
    Let $O_1 = \langle C^1, \vdash^1, C^1_o, \Gamma^1_o \rangle$ and $O_2 = \langle C^2, \vdash^2, C^2_o, \Gamma^2_o \rangle$ be two extended consequence systems, i.e. \textbf{o}ntologies. A tuple $O_1 \cup O_2 = \langle C, \vdash, C_o, \Gamma_o \rangle$ is the \textit{connection} of $O_1$ and $O_2$ if:
    \begin{itemize}
        \item $\langle C, \vdash \rangle$ is the result of fibring between $\langle C^1, \vdash^1 \rangle$ and $\langle C^2, \vdash^2 \rangle$
        \item $C_o = C_o^1 \cup C_o^2$
        \item $\tau_1^{-1}(\phi) \cup \tau^{-1}_2(\psi) \subseteq \Gamma_o$ for each $\phi \in \Gamma^1_o, \psi \in \Gamma^2_o$
    \end{itemize}
    where $\tau_1^{-1}$ and $\tau_2^{-1}$ are the substitutions derived via fibring.
\end{defn}

The connection between two \textbf{o}ntologies constructs an object whose consequence system aspect is the result of the fibring of the consequence systems in the input \textbf{o}ntologies, and whose ontological aspect is preserved under substitution. We shall now prove the resulting object is indeed an \textbf{o}ntology.

\begin{prop}
    \label{het-connection-is-ontology}
    The heterogeneous connection $O_1 \cup O_2$ of \textbf{o}ntologies $O_1$ and $O_2$ is an \textbf{o}ntology.
\end{prop}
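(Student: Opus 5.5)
The plan is to verify the three clauses in the definition of an extended consequence system for the tuple $O_1 \cup O_2 = \langle C, \vdash, C_o, \Gamma_o \rangle$, taking them in order. Since an \textbf{o}ntology is by definition just an extended consequence system, this suffices.

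\textbf{Clause 1.} We need that $\langle C, \vdash \rangle$ is a consequence system. This is immediate from the first item of the definition of heterogeneous connection, together with the cited result of \textcite[prop. 4.1.24]{carnielli2008analysis}, which states that the fibring of two consequence systems is again a consequence system. Briefly, extensivity is built into the inductive clause $\Gamma \subseteq \Gamma^\vdash$. Monotonicity and idempotence follow because $\Gamma^\vdash$ is the least set containing $\Gamma$ and closed under $\Delta \mapsto \Delta^{\vdash'} \cup \Delta^{\vdash''}$.

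\textbf{Clause 2.} We need the inclusion of the ontological signature in the logical one, levelwise. For each $k$, $(C_o)_k = (C^1_o)_k \cup (C^2_o)_k$. Each $O_i$ is an extended consequence system, so $(C^i_o)_k \subset C^i_k \subseteq C^1_k \cup C^2_k = C_k$. Hence the union is contained in $C_k$.

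One caveat concerns strictness. The definition writes $\subset$, and if this is read as proper inclusion, a union of proper subsets of $C^1_k$ and $C^2_k$ could in principle exhaust $C^1_k \cup C^2_k$. I would therefore read $\subset$ as $\subseteq$, consistent with the paper's usage elsewhere. Alternatively, one could note that strictness is preserved whenever some connective of $C^1_k \setminus C^1_{o,k}$ does not lie in $C^2_{o,k}$.

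\textbf{Clause 3.} This is the main obstacle: we must show that every $\phi \in \Gamma_o$ lies in $\emptyset^\vdash$. I would take $\Gamma_o$ to be exactly $\tau_1^{-1}(\Gamma^1_o) \cup \tau_2^{-1}(\Gamma^2_o)$. The definition only requires containment, so the argument has to fix this minimal choice, or else restrict to connections whose $\Gamma_o$ is of this form.

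Now take $\phi \in \Gamma^1_o$, so that $\phi \in (\emptyset)^{\vdash^1}$. Apply the second clause of the inductive definition of $\vdash$ with $\Delta = \emptyset \subseteq \emptyset^\vdash$. This gives $\emptyset^{\vdash'} \subseteq \emptyset^\vdash$, where
\[
\emptyset^{\vdash'} = \tau_1^{-1}\bigl((\tau_1(\emptyset))^{\vdash^1}\bigr) = \tau_1^{-1}\bigl(\emptyset^{\vdash^1}\bigr).
\]
This set contains $\tau_1^{-1}(\phi)$, and the same argument applies symmetrically to $O_2$.

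The delicate points here are notational. First, $\tau_i^{-1}$ must be understood as the proper substitution defined on all of $L(C^i)$, and in particular on the translated copy of $\Gamma^i_o$. Second, formulas of $\Gamma^i_o$ must be identified with their images under $\tau_i$, using $\tau_i \circ \tau_i^{-1} = id$ as asserted in the footnote. If that identification fails for formulas of $L(C^i)$ not in the image of $\tau_i$, I would instead argue via structurality, or simply via the fact that $\emptyset^{\vdash^1}$ is mapped entirely into $\emptyset^\vdash$ by the closure clause. With all three clauses verified, the tuple is an extended consequence system, hence an \textbf{o}ntology.
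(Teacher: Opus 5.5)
Your proposal is correct and follows the paper's proof step for step. You cite the fibring result for clause 1, take the levelwise union of ontological signatures for clause 2, and for clause 3 apply the inductive clause with $\Delta=\emptyset$ to get $\tau_i^{-1}(\Gamma^i_o)\subseteq\tau_i^{-1}(\emptyset^{\vdash^i})\subseteq\emptyset^{\vdash}$. Your two caveats, reading $\subset$ as $\subseteq$ and fixing $\Gamma_o$ to be exactly $\tau_1^{-1}(\Gamma^1_o)\cup\tau_2^{-1}(\Gamma^2_o)$, are assumptions the paper's proof makes tacitly; your worry about identifying formulas with their $\tau_i$-images is unnecessary, since $\tau_i^{-1}$ is already defined on all of $L(C^i)$.
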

% \marginpar{Review.}
\begin{proof}
    Let $O_1 = \langle C^1, \vdash^1, C^1_o, \Gamma^1_o \rangle$ and $O_2 = \langle C^2, \vdash^2, C^2_o, \Gamma^2_o \rangle$. Suppose $O_1 \cup O_2 = \langle C, \vdash, C_o, \Gamma_o \rangle$. By \textcite[prop. 4.1.24]{carnielli2008analysis}, $\langle C, \vdash \rangle$ is a consequence system. By definition, $C_o$ is a signature and it is straight-forward to see that $C_o \subseteq C$.

    Consider $\emptyset^\vdash$ obtained via fibring, i.e the axiomatic theory of $\langle C, \vdash \rangle$. Clearly, by definition, we have that $\emptyset \in \emptyset^\vdash$. Therefore $\emptyset^{\vdash^1} \cup \emptyset^{\vdash^2} \subseteq \emptyset^\vdash$.
    
    Note that for each $\phi \in \Gamma^1_o$, $\tau^{-1}_1(\phi) \in \emptyset^{\vdash^1} \subseteq \emptyset^\vdash$. Similarly, for every $\psi \in \Gamma^2_o$, $\tau^{-1}_2(\psi) \in \emptyset^{\vdash^2} \subseteq \emptyset^\vdash$. Thus, $O_1 \cup O_2$ conserves the ontological aspect and therefore is an \textbf{o}ntology.
\end{proof}

The ontological aspect of the resulting connected \textbf{o}ntology naively merges the ontological aspects of the input \textbf{o}ntologies. That is, no additional steps are taken to handle the aforementioned issues of synonymy (a situation where different symbols have the same intended meaning) and homonymy (a situation where the same symbol has different meanins). Rather, these issues can be dealt with via morphisms in $\mathbf{ECsy}$. Recall the fact \textbf{o}ntologies may live inside an extended development graph, then heteregeneous connection can be interpreted as adding the coproduct of $O_1$ and $O_2$ into the graph alongside the required morphisms, as visualized in figure \ref*{fig:het-connection}.

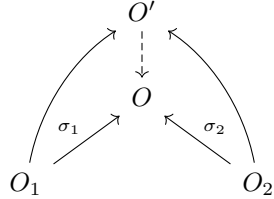
\begin{figure}[ht]
    \[\begin{tikzcd}
    & {O'} \\
    & O \\
    {O_1} && {O_2}
    \arrow["{\sigma_1}", from=3-1, to=2-2]
    \arrow["{\sigma_2}"', from=3-3, to=2-2]
    \arrow[curve={height=-12pt}, from=3-1, to=1-2]
    \arrow[curve={height=12pt}, from=3-3, to=1-2]
    \arrow[dashed, from=1-2, to=2-2]
    \end{tikzcd}\]
    \caption{Heterogeneous connection of $O_1$ and $O_2$.}
    \label{fig:het-connection}
\end{figure}

If two \textbf{o}ntologies share the same common vocabulary, i.e. the signature of the ontological aspect shares common symbols, then one may add morphisms to ``prepare'' the input \textbf{o}ntologies. This sort of ``preparation'' via extra morphisms allows one to ensure the input \textbf{o}ntologies refer to the same concepts using the same symbols and use different symbols when there is a homonymy issue. This is depicted in figure \ref*{fig:het-connection-preparation}.

\begin{figure}[h]
    \[\begin{tikzcd}
        &&& {O'} \\
        &&& O \\
        {O_1} & \ldots & {O_1^f} && {O_2^f} & \ldots & {O_2}
        \arrow[from=3-3, to=2-4]
        \arrow[dashed, from=1-4, to=2-4]
        \arrow[from=3-5, to=2-4]
        \arrow[curve={height=6pt}, from=3-5, to=1-4]
        \arrow[curve={height=-6pt}, from=3-3, to=1-4]
        \arrow[dashed, from=3-1, to=3-2]
        \arrow[dashed, from=3-2, to=3-3]
        \arrow[dashed, from=3-6, to=3-5]
        \arrow[dashed, from=3-7, to=3-6]
    \end{tikzcd}\]
    \caption{Heterogeneous connection of $O_1$ and $O_2$, with ``preparation''. Note the original input \textbf{o}ntologies may pass through many steps prior to connecting, resulting in final \textbf{o}ntologies $O_1^f$ and $O_2^f$.}
    \label{fig:het-connection-preparation}
\end{figure}
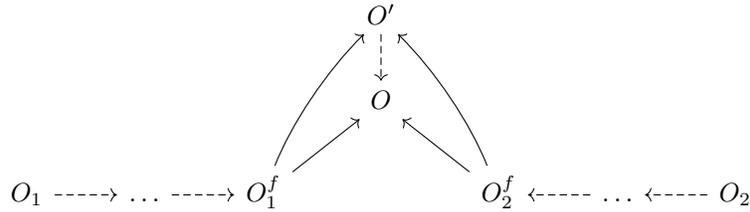

Ontological connection via fibring is a compelling concept, as it offers a means of combining ontologies while ensuring a minimal extension and preserving certain properties. Results compiled by \textcite[p. 150-160]{carnielli2008analysis} include preservation of structurality, weakness relation transitivity when the consequence systems of input \textbf{o}ntologies are structural and compactness. Additionally, they present three characteristics which increase the attractiveness of fibring: firstly, it consists of a homogeneous combination mechanism; secondly, the mechanism is algorithmic and easily extendable, as it has been done for extended consequence systems; thirdly, the resulting combination is canonical in that it is minimally stronger than the input consequence systems of the \textbf{o}ntologies.

Fibring-based ontological connection has a few clear shortcomings. As shown by \citeauthor{Marcelino2015} in a series of papers \cite{Marcelino2015, 10.1093/logcom/exw023, Marcelino2016-MARDAC-12,math10193481}, fibring in general does not preserve logical decidability --- a very sought-after property for computational purposes. However, disjoint fibring (when the input consequence systems have disjoint signatures) does preserve decidability. Additionally, \textcite{Marcelino2015} proved that, when decidability is preserved in the fibred consequence system, the complexity of the decision problem can be polynomially reduced to the worst decision problem of the input consequence systems.

As previously stated, consequence systems may be enriched with interpretation systems to construct signature-bound \textit{logic systems}. In addition, logic systems may also be fibred, which raises the question of the status of soundness and completeness preservation. \textcite{0e09d9fc-b459-37b1-b0d4-83462febd50c, 10.1093/jigpal/10.4.413} investigate under which conditions fibring preserves completeness, by showing classes of complete logic systems whose fibring is also complete.

Fibring, more specifically algebraic fibring, is far from the only kind of combination operation in the literature applicable to consequence systems, logic systems and logics in general. It is not in scope of this section to discuss in great detail about other operations, as its purpose is to establish an initial ground upon which to build heterogeneous \textbf{o}ntologies based on consequence systems. However, what follows is a very brief overview of a few of these works, as they address property-preservation issues and also illustrate potential paths for further development of the da Costian-Tarskianist approach:

\begin{enumerate}
    \item \textit{Graph-theoretic fibring}: \textcite{sernadas2009graph, Sernadas2009-SERGFO} present a graph-theoretic account of fibring based on multi-graphs (or \textit{m-graphs}) targetting the point-wise combination of the models of input logic systems. It has been shown by \textcite{10.1093/logcom/exq022} that this kind of fibring preserves the finite model property, which entails decidability under certain conditions (see \cite{Libkin2004} for more on this matter).
    \item \textit{Importing logics}: an asymmetric combination technique originally devised by \textcite{Rasga2012ImportingL} where the combined logic system is endowed with an importing connective, which allows formulas of the imported logic to be transformed into formulas of the importing logic. \textcite{3cf1a544-136a-3ddc-9d97-d36822e7c42f} have shown this technique does preserve soundness and completeness and in \cite{b61a702e-e512-3718-b3f0-487a04f93593} it is shown fibring may be characterized by \textit{biporting}, an extension of importing which includes an exporting connective.
    \item \textit{Meet-combination of logics}: originally devised by \textcite{8133014}, meet-combination is an approach to connecting logics based on combined constructors, which generate melded connectives inheriting the intersection of the properties of the input logic instead of their conjunction. Meet-combination was shown to preserve completeness and decidability under a set of admissable rules in \cite{RASGA_SERNADAS_SERNADAS_2016}.
\end{enumerate}

\subsection{Decomposition}

The intuitive idea of decomposition is to extract one or more sub-\textbf{o}ntologies from a given \textbf{o}ntology, in such a way that the sub-\textbf{o}ntologies are weaker, in a sense, than the original \textbf{o}ntology. The concept of ontological decomposition requires extending the definitions of \textbf{o}ntology and development graph to include the splitting machinery presented in \textcite[p. 391-400]{carnielli2008analysis}.

The initial step towards decomposition is the notion of a $k$-restricted language.

\begin{defn}[$k$-restricted language]
    Given $k \in \mathbb{N}$ and a signature $C$, $L(C)[k]$ is the set of formulas $\phi$ in $L(C)$ such that the set of schema variables occurring in $\phi$ is exactly $\{ \xi_1, \ldots, \xi_k \}$.
\end{defn}

The category $\mathbf{sSig}$ which extends $\mathbf{Sig}$ by introducing splitting signature morphisms, based on $k$-restricted languages.

\begin{defn}[Splitting signature morphism]
    Given two signatures $C$ and $C'$, a \emph{splitting signature morphism} $f$ between them, denoted $f : C \rightarrow C'$, is a mapping $f : L(C) \rightarrow L(C')$ such that if $c \in C_k$ then $f(c) \in L(C')[k]$.

    A splitting signature morphism $f$ induces a mapping $\hat{f} : L(C) \rightarrow L(C')$ such that:
    \begin{itemize}
        \item $\hat{f}(\xi) = \xi$ if $\xi \in \Xi$;
        \item $\hat{f}(c) = f(c)$ if $c \in C_0$;
        \item $\hat{f}(c(\phi_1, \ldots, \phi_k)) = f(c)(\hat{f}(\phi_1),\ldots,\hat{f}(\phi_k))$ if $c \in C_k$, $\phi_1, \ldots, \phi_k \in L(C)$ and $k > 0$.
    \end{itemize}
\end{defn}

The two categories $\mathbf{sSig}$, the category of splitting signatures, and $\mathbf{sCon}$, the category of splitting consequence systems, provide the necessary basis for ontological decomposition.

\begin{defn}[Category $\mathbf{sSig}$]
    The category $\mathbf{sSig}$ is defined as follows:
    \begin{itemize}
        \item Its objects are signatures;
        \item Its morphisms are splitting signature morphisms.
    \end{itemize}
    Composition between two signature morphisms $f : C \rightarrow C'$ and $g: C' \rightarrow C''$, denoted by $g \cdot f$, is defined to be the signature morphism $g \cdot f : C \rightarrow C''$ given by the mapping $\hat{g} \circ f: |C| \rightarrow L(C'')$.
    
    The identity morphism $id_C : C \rightarrow C$ is defined as $id_C(c) = c$ for $c \in C_0$ and $id_C(c) = c(\xi_1, \ldots, x_k)$ for $c \in C_k$, $k > 0$.
\end{defn}

% define sCon

\begin{defn}[Category $\mathbf{sCon}$]
    The category $\mathbf{sCon}$ of splitting consequence systems is defined as follows:
    \begin{itemize}
        \item Its objects are consequence systems;
        \item A morphism $f : \mathcal{C} \rightarrow \mathcal{C}'$ is a morphism $f : C \rightarrow C'$ in $\mathbf{sSig}$ such that, for every $\Gamma \cup \{\phi\} \subseteq L(C)$,
        $$ \Gamma \vdash \phi \text{ implies } \hat{f}(\Gamma) \vdash' \hat{f}(\phi)$$
    \end{itemize}
    Composition and identity morphisms are as in $\mathbf{sSig}$.
\end{defn}

It is not necessary to extend the $\mathbf{sCon}$ into an analogous category whose objects are extended consequence systems for a few reasons. The morphisms in $\mathbf{sCon}$, by definition, preserve entailment over a splitting signature morphism, which is necessary requirement in the context of ontological knowledge. Furthermore, the operations of ontological refinement, integration and connection insist on preserving or adding onto the ontological aspect of extended consequence systems --- it does not make sense to impose this restriction in the case of ontological decomposition, since the intuitive goal is to be able to extract significant chunks of an \textbf{o}ntology into  sub-\textbf{o}ntologies.

Before formally defining ontological decomposition, the following proposition is crucial to guarantee the machinery we use does indeed produce \textbf{o}ntologies.

\begin{prop}
    \label{prop:decomposition-structural}
    The category $\mathbf{sCon}$ has products of arbitrary small, non-empty families of objects. Moreover, if every object of the family is structural, so is the product.
\end{prop}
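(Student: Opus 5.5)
The plan is to build the product explicitly, following the construction in \textcite[p. 391-400]{carnielli2008analysis}. Let $\{\mathcal{C}_i = \langle C^i, \vdash_i \rangle\}_{i \in I}$ be a non-empty small family in $\mathbf{sCon}$.

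\textbf{Product signature.} For each arity $k$, let
\[
  C_k = \prod_{i \in I} L(C^i)[k],
\]
so that a $k$-ary connective of the product is an $I$-indexed tuple $(\phi_i)_{i\in I}$, with each $\phi_i$ a formula of $C^i$ in exactly the variables $\xi_1,\ldots,\xi_k$.

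\textbf{Projections.} The projection $\pi_j : C \to C^j$ sends such a tuple to $\phi_j$. This is a legitimate splitting signature morphism, since $\phi_j \in L(C^j)[k]$. Its extension $\hat{\pi}_j$ then substitutes componentwise through whole formulas.

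\textbf{Consequence operator.} Define, for $\Gamma \cup \{\phi\} \subseteq L(C)$,
\[
  \Gamma \vdash \phi \iff \hat{\pi}_i(\Gamma) \vdash_i \hat{\pi}_i(\phi) \text{ for every } i \in I,
\]
that is, $\mathsf{C}(\Gamma) = \bigcap_i \hat{\pi}_i^{-1}\bigl(\mathsf{C}_i(\hat{\pi}_i(\Gamma))\bigr)$. The three axioms of Definition~\ref{def-cons-system} follow coordinatewise.
\begin{itemize}
  \item Extensivity: $\hat{\pi}_i(\Gamma) \subseteq \mathsf{C}_i(\hat{\pi}_i(\Gamma))$.
  \item Monotonicity: $\hat{\pi}_i$ preserves inclusions.
  \item Idempotence: $\hat{\pi}_i(\mathsf{C}(\Gamma)) \subseteq \mathsf{C}_i(\hat{\pi}_i(\Gamma))$, so applying $\mathsf{C}_i$ and using its idempotence gives the claim.
\end{itemize}
By construction each $\pi_i$ is an $\mathbf{sCon}$-morphism.

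\textbf{Universal property.} Given a cone $f_i : \mathcal{D} \to \mathcal{C}_i$ with $\mathcal{D}$ over signature $D$, define $f : D \to C$ by $f(d) = (f_i(d))_{i\in I}$ for $d \in D_k$. Each component lies in $L(C^i)[k]$, so this is well defined. The key technical lemma is
\[
  \hat{\pi}_i \circ \hat{f} = \hat{f_i} \quad \text{on } L(D).
\]
I would prove it by induction on formulas. The atomic cases are immediate. For the step case one needs that $\hat{\pi}_i$ commutes with substituting formulas into the $k$-restricted formula $f(d)$. This is a separate substitution lemma: extensions of splitting morphisms are homomorphisms, hence commute with substitution.

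With this identity in hand:
\begin{itemize}
  \item $\pi_i \cdot f = f_i$ holds in $\mathbf{sSig}$.
  \item $f$ is an $\mathbf{sCon}$-morphism: if $\Gamma \vdash_{\mathcal{D}} \phi$, then $\hat{f_i}(\Gamma) \vdash_i \hat{f_i}(\phi)$ for all $i$, so $\hat{f}(\Gamma) \vdash \hat{f}(\phi)$.
  \item Uniqueness: any mediating $g$ must satisfy $\pi_i(g(d)) = f_i(d)$ for all $i$, which determines the tuple $g(d)$.
\end{itemize}
Non-emptiness of $I$ is used so that the product signature and operator are not degenerate. With $I = \emptyset$ every entailment would hold vacuously, and the terminal object behaves differently.

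\textbf{Structurality.} Let $\sigma$ be a substitution on $L(C)$. For each $i$, define $\sigma_i$ on $\Xi$ by $\sigma_i(\xi) = \hat{\pi}_i(\sigma(\xi))$. The same homomorphism lemma gives
\[
  \hat{\pi}_i \circ \sigma = \sigma_i \circ \hat{\pi}_i .
\]
Now suppose $\Gamma \vdash \phi$. Then $\hat{\pi}_i(\Gamma) \vdash_i \hat{\pi}_i(\phi)$ for every $i$. Structurality of $\mathcal{C}_i$ yields $\sigma_i(\hat{\pi}_i(\Gamma)) \vdash_i \sigma_i(\hat{\pi}_i(\phi))$. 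Rewriting with the commutation gives $\hat{\pi}_i(\sigma(\Gamma)) \vdash_i \hat{\pi}_i(\sigma(\phi))$ for every $i$, hence $\sigma(\Gamma) \vdash \sigma(\phi)$.

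\textbf{Main obstacle.} I expect the main difficulty to be the substitution and commutation lemma for $\hat{\pi}_i$, and specifically the variable bookkeeping. The definition of $L(C)[k]$ requires the variables to be \emph{exactly} $\xi_1,\ldots,\xi_k$, and the paper's displayed $\hat{f}$ writes $f(c)(\ldots)$, which must be read as simultaneous substitution of $\xi_j$ by $\hat{f}(\phi_j)$. I would first fix that reading explicitly, then prove by induction that
\[
  \hat{h}\bigl(\psi[\xi_j := \chi_j]\bigr) = \hat{h}(\psi)[\xi_j := \hat{h}(\chi_j)]
\]
for any splitting morphism $h$. Every other step reduces to this identity.
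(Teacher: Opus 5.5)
Your uniqueness step is where the argument breaks, and the break is not cosmetic. In $\mathbf{sSig}$ the composite $\pi_i\cdot g$ is $\hat\pi_i\circ g$. A morphism $g$ into your product may send a $k$-ary connective $d$ to \emph{any} formula of $L(C)[k]$, not only to a single tuple-connective applied to $\xi_1,\ldots,\xi_k$. (Your own $f(d)$ should in fact be the formula $(f_i(d))_{i\in I}(\xi_1,\ldots,\xi_k)$.) So a mediating $g$ only needs $\hat\pi_i(g(d))=f_i(d)$ for all $i$, and the maps $\hat\pi_i$ are not jointly injective on $L(C)[k]$. Concretely, let $\mathcal{N}$ be a consequence system on the signature with a single unary connective $\neg$ (any operator will do), and take the family $\mathcal{N},\mathcal{N}$. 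Let $\mathcal{D}$ have one unary connective $d$ and the identity operator $\Gamma\mapsto\Gamma$; then every $\mathbf{sSig}$-morphism out of $\mathcal{D}$ is an $\mathbf{sCon}$-morphism. Put $t=(\neg\xi_1,\neg\xi_1)$ and $s=(\neg\neg\xi_1,\neg\neg\xi_1)$ in $C_1$, and $f_1(d)=f_2(d)=\neg\neg\xi_1$. Then both $d\mapsto s(\xi_1)$ and $d\mapsto t(t(\xi_1))$ are mediating morphisms, so your cone is not a product.

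No better choice of signature repairs this: with $\mathbf{sSig}$ and $\mathbf{sCon}$ as defined in the paper, the product of $\mathcal{N}$ with itself does not exist. Test against objects with one $k$-ary connective and the identity operator; morphisms from such an object into $X$ correspond exactly to elements of $L(X)[k]$. Since $L(\{\neg\})[0]$ and $L(\{\neg\})[2]$ are empty, a product $P$ would need $L(P)[0]=L(P)[2]=\emptyset$, so $P$ has only unary connectives. Then $w\mapsto(\hat\pi_1(w),\hat\pi_2(w))$ would have to be a bijection from words over $P_1$ onto $\mathbb{N}^2$ (reading $\neg^n\xi_1$ as $n$). But this map depends only on the multiset of letters, so distinct $a,b\in P_1$ give $ab\neq ba$ with the same image. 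With at most one letter, the image is $\{nv : n\in\mathbb{N}\}$ for a single $v$, which misses $(1,0)$ or $(0,1)$. The paper's proof is only a citation to Carnielli et al., so it does not close this either.

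What your construction does establish is that mediating morphisms exist and are unique up to interderivability. The consequence axioms, the projections, existence via the substitution lemma, and the structurality transfer are all correct. If $g,g'$ both mediate, then $\hat\pi_i\circ\hat g=\hat\pi_i\circ\hat g'$ for every $i$, so $\hat g(\varphi)$ and $\hat g'(\varphi)$ are interderivable in the product. The missing idea is to build that into the category. For example, identify parallel $\mathbf{sCon}$-morphisms $f,g$ whenever $\hat f(\varphi)$ and $\hat g(\varphi)$ are interderivable for all $\varphi$; this is a congruence, since morphisms preserve entailment. In that quotient your cone is a genuine product, and your structurality argument goes through unchanged.
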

\begin{proof}
    Due to \textcite[prop. 9.1.6]{carnielli2008analysis}.
\end{proof}

In order to define ontological decomposition, one needs to extend the extended development graph to include extra links tied to splitting consequence system morphisms. This new structure is called an extended splittable development graph, henceforth ESDG, and it is defined as follows.

\begin{defn}[extended splittable development graph]
    An \emph{extended splittable development graph} or ESDG is an extended development graph $\mathcal{AG} = \langle \mathcal{N}, \mathcal{L} \rangle$ such that $\mathcal{L}$ contains additional directed links:
    \begin{itemize}
        \item \textit{splitting links} $K \xrightarrow{f} N$ for each $K, N$ such that there exists a morphism $f$ in $\mathbf{sCon}$ between the underlying consequence systems labelling $K$ and $N$
    \end{itemize}
\end{defn}

Finally, by adding splitting links, one can define ontological decomposition.

\begin{defn}[Ontological decomposition]
    Let $O = \langle C, \vdash, C_o, \Gamma_o \rangle$ be an \textbf{o}ntology in an ESDG. We say $O$ \textit{decomposes into a family of \textbf{o}ntologies} $O_1, \ldots, O_n$ if:
    \begin{enumerate}
        \item there are splitting links from $O$ to each $O_i$, $1 \leq i \leq n$;
        \item the generated diagram is a product.
    \end{enumerate}
    \[\begin{tikzcd}
        & {O'} \\
        & O \\
        {O_1} & \ldots & {O_n}
        \arrow["{\pi_1}"', from=2-2, to=3-1]
        \arrow["{\pi_n}", from=2-2, to=3-3]
        \arrow[curve={height=12pt}, from=1-2, to=3-1]
        \arrow[curve={height=-12pt}, from=1-2, to=3-3]
        \arrow[dashed, from=1-2, to=2-2]
        \arrow[from=2-2, to=3-2]
        \end{tikzcd}\]
\end{defn}

Observe that, as previously stated, decomposition does not impose restrictions on the ontological aspect and instead relies on entailment preservation provided by splitting morphisms. Perhaps non-intuitively, decomposition also does not enforce the ``component'' \textbf{o}ntologies to be weaker than the input \textbf{o}ntology in any sense. It is often times desirable to decompose into a weaker family of \textbf{o}ntologies, however it is not necessary to ensure at minimum structurality as per proposition \ref{prop:decomposition-structural}.

Ontological decomposition is also a means to provide possible-translation characterization and semantics to \textbf{o}ntologies, in the sense of \textcite{carnielli2008analysis}. As the current objective is to present an initial proposal of ontological decomposition, it is not in scope to address this topic in depth at this time. Nevertheless, it is possible that ontological decomposition could result in the development of interesting ontological machinery. Indeed, some \textbf{o}ntologies, in practice, already implement some form of decomposition as a means of describing a target \textbf{o}ntology. For example, e.g. BFO is decomposed into its SNAP and SPAN sub-\textbf{o}ntologies.

\section{Conclusions and future work}

At this stage, da Costian-Tarskianism is merely a proposal, for there is no practical, ready-to-use, production-ready implementation of it similar to Carnapian-Goguenism's \textsc{Hets}. Notwithstanding, it is possible to implement the approach using a computational category-theoretical framework, such as the one currently being developed by the AlgebraicJulia\footnote{\href{https://github.com/AlgebraicJulia}{https://github.com/AlgebraicJulia}} project, which in turn is based on the work developed by \textcite{Patterson2022categoricaldata} on attributed $\mathcal{C}$-sets.

Despite lack of implementation, da Costian-Tarskianism has a few theoretical benefits over Carnapian-Goguenism:
\begin{itemize}
    \item The new, revised definition of \textbf{o}ntology is based on syntax presentation and is not restrictive of any kind of semantics, allowing one to attach novel and non-relational semantics to \textbf{o}ntologies.
    \item Fibring-based ontological connection does not force generated \textbf{o}ntologies to have certain properties regardless of input \textbf{o}ntologies. Furthermore, more complex combination operations historically derived from fibring can be incorporated into da Costian-Tarskianism to ensure ontological connection preserves additional properties.
    \item Adding a new logic to an extended development graph does not require rebuilding an entire artificial construct, such as the Grothendieck institutions, since extended consequence systems are structurally simple and abstract enough to not require flattening.
    \item da Costian-Tarskianism supports one extra heterogeneous operation, namely ontological decomposition. However, it is noteworthy that Carnapian-Goguenism could also account for a similar operation using a categorical universal property construct, similarly to what has been proposed in this section.
\end{itemize}

For those seeking a concise overview, table \ref{tab:cg-dt-comparison} presents a comparative analysis between Carnapian-Goguenism and da Costian-Tarskianism.

\begin{table}[h]
    \centering
    \label{tab:cg-dt-comparison}
    \begin{tabular}{ccc}
        & Carnapian-Goguenism & da Costian-Tarskianism \\
\hline
Structural underpinning & Semantics & Syntax \\
Philosophical underpinning & Descriptive & Normative \\
Refinement?     & \cmark   & \cmark             \\
Integration?    & \cmark   & \cmark             \\
Connection?     & \cmark   & \cmark             \\
Decomposition?  & \xmark   & \cmark             \\
Implementation maturity  & Production-ready & Baby-steps
\end{tabular}
\caption{Table summarizing key traits of Carnapian-Goguenism and da Costian-Tarskianism.}
\end{table}

\printbibliography

\end{document}